\newcommand{\dup}{\mathrm{d}}
\newcommand{\bR}{\mathbb{R}}
\newcommand{\mmm}{\hspace{10mm}}
\theoremstyle{plain}
\newtheorem{theorem}{Theorem}[section]
\newtheorem*{theorem*}{Theorem}
\newtheorem{lemma}[theorem]{Lemma}
\theoremstyle{definition}
\newtheorem{definition}[theorem]{Definition}
\newtheorem*{example*}{Example}
\theoremstyle{remark}
\icmltitlerunning{Efficient, Accurate and Stable Gradients for Neural ODEs}
\begin{document}

\twocolumn[
\icmltitle{Efficient, Accurate and Stable Gradients for Neural ODEs}



\begin{icmlauthorlist}
\icmlauthor{Sam McCallum}{bath}
\icmlauthor{James Foster}{bath}
\end{icmlauthorlist}

\icmlaffiliation{bath}{Department of Mathematics, University of Bath, Bath, United Kingdom}

\icmlcorrespondingauthor{Sam McCallum}{sm2942@bath.ac.uk}

\icmlkeywords{Machine Learning, ICML, Neural ODEs, Numerical Methods, Backpropagation, Reversible Architectures}

\vskip 0.3in
]



\printAffiliationsAndNotice{}  

\begin{abstract}
Training Neural ODEs requires backpropagating through an ODE solve. The state-of-the-art backpropagation method is recursive checkpointing that balances recomputation with memory cost. Here, we introduce a class of algebraically reversible ODE solvers that significantly improve upon both the time and memory cost of recursive checkpointing. The reversible solvers presented calculate exact gradients, are high-order and numerically stable -- strictly improving on previous reversible architectures.
\end{abstract}

\section{Introduction}
\subsection{Neural ODEs}
Neural ODEs introduce a model class where the vector field of an ODE is parameterized as a neural network \cite{chen2018neural}. This strong prior on model space can offer an advantage in many problems.

For example, one of the most exciting areas for Neural ODEs is scientific modeling. This is emphasized by Hamiltonian and Lagrangian networks for learning physical systems \cite{greydanus2019hamiltonian, cranmer2020lagrangian}; and more generally by universal differential equations for combining theoretical models and neural network vector fields \cite{rackauckas2020universal}. Further, if theory is completely unknown, Neural ODEs can be used to identify latent dynamics from data. There are many examples of these approaches being applied to problems across the sciences \cite{lee2021parameterized, chen2022forecasting, lu2021neural, portwood2019turbulence, boral2024neural}.

The continuous latent state encoded by Neural ODEs can also be beneficial in time-series modeling, where data is often partially observed and irregularly sampled. The application to time-series was originally explored by latent ODEs \cite{rubanova2019latent} and built upon by Neural Controlled Differential Equations (CDEs) \cite{kidger2020neural} and Stochastic Differential Equations (SDEs) \cite{li2020scalable, kidger2021neural, issa2024non}.

\begin{figure}
    \centering
    \includegraphics[width=\linewidth]{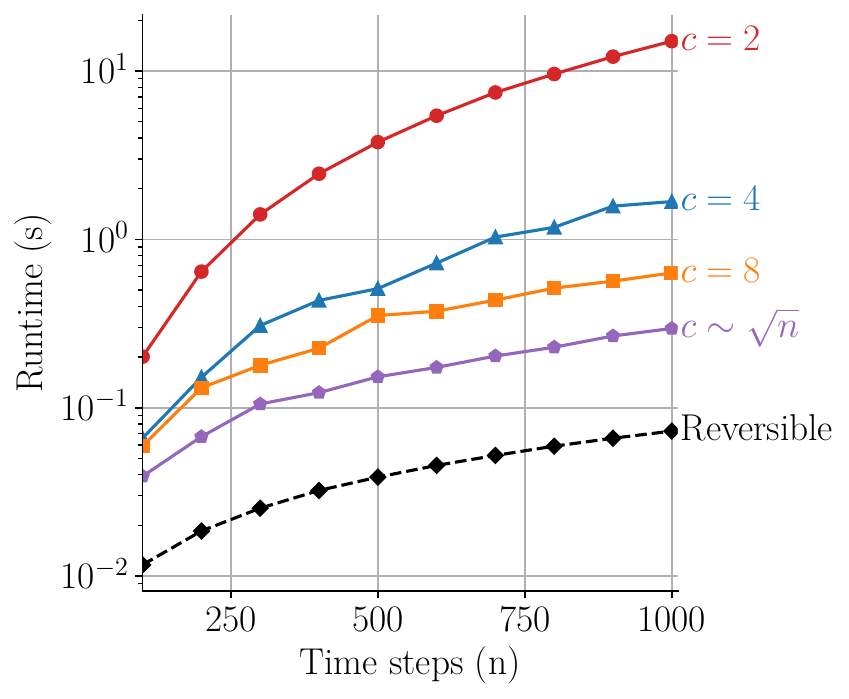}
    \caption{Runtime complexity of reversible backpropagation algorithm vs recursive checkpointing with $c$ checkpoints.}
    \label{fig:complexity}
\end{figure}
\subsection{Training}
Training Neural ODEs requires that we backpropagate through the ODE solve and proceeds by two main approaches: standard automatic differentiation through the internal numerical solver operations (discretize-then-optimize) or via the continuous adjoint method (optimize-then-discretize) \cite{kidger2022neural}.

Discretize-then-optimize is generally the preferred training method as gradient calculation is exact, rather than the approximation made by optimize-then-discretize. However, the memory cost of storing all numerical operations can become prohibitively large. 

To keep memory cost low, checkpointing algorithms are used that balance recomputation and memory usage by only storing a subset of the numerical operations \cite{stumm2010new, griewank1992achieving, gholami2019anode}. 

\newpage

Specifically, online recursive checkpointing algorithms guarantee an $O(n\log n)$ time cost while storing $O(\sqrt{n})$ checkpoints, where $n$ is the computation length. An online algorithm is required to handle the case where $n$ is unknown in advance, for example when using adaptive step size methods.

The online recursive checkpointing algorithm from \cite{stumm2010new} will form the baseline method for our comparisons here.

\subsection{Algebraically reversible Solvers}
Interestingly, it is possible to backpropagate through ODEs without storing any intermediate numerical operations $\emph{and}$ calculate exact gradients. This is realized by constructing an algebraically reversible solver, whereby the solver state at step $n$ can be reconstructed exactly in closed form from the solver state at step $n+1$. For example, symplectic solvers for Hamiltonian systems are intrinsically reversible \cite{greydanus2019hamiltonian}.

In general, there have been two previously proposed algebraically reversible solvers: Zhuang et al. \cite{zhuang2021mali} use the Asynchronous Leapfrog (ALF) method that is a second-order reversible ODE solver; and, Kidger et al. \cite{kidger2021efficient} devise a reversible Heun method that is a second-order ODE solver and $0.5$ strong order SDE solver.

Reversible solvers realize an $O(n)$ time and $O(1)$ memory backpropagation algorithm, improving upon the complexity of recursive checkpointing. However, current reversible solvers are low-order and suffer from poor numerical stability. The use of these methods in practice is therefore limited.

\begin{figure*}[t]
\vskip 0.2in
    \centering
    \usetikzlibrary{positioning, shapes, arrows}

\begin{tikzpicture}[
    node distance=1.5cm and 2.5cm,
    sum/.style={circle, draw, minimum size=5mm, inner sep=0pt},
    func/.style={circle, draw, thick, minimum size=6mm, text width=3.5mm, align=center},
    arrow/.style={->, >=stealth, thick}
]

\node at (-1, 2) {\textbf{(a)}};

\node (y0) at (0,2) {$y_n$};
\node (z0) at (0,0) {$z_n$};
\node[func, fill=red!20, draw=red!80] (psi_forward) at (3.0,1) {$\Psi_h$};
\node[sum] (plus1) at (2.3, 2) {$+$};
\node[sum] (plus2) at (3.5,2) {$+$};
\node[circle, fill=black, inner sep=1.5pt] (startz1) at (1.3, 0) {};
\node[circle, fill=black, inner sep=1.5pt] (startz2) at (2.5, 0) {};
\node[circle, fill=black, inner sep=1.5pt] (starty1) at (4.0, 2) {};
\node[func, fill=blue!20, draw=blue!80] (psi_backward) at (4.5,1) {$\Psi_{\scalebox{0.75}[1.0]{-}h}$};
\node[sum] (minus1) at (5,0) {$-$};
\node (z1) at (6,0) {$z_{n+1}$};
\node (y1) at (6,2) {$y_{n+1}$};

\draw[arrow] (startz2) -- (psi_forward);
\draw[arrow] (y0) -- (plus1) node[pos=0.5, above] {$\times \lambda$};
\draw[arrow] (plus1) -- (plus2);
\draw[arrow] (startz1) -- (plus1) node[pos=0.55, left] {$\times (1-\lambda)$};
\draw[arrow] (psi_forward) -- (plus2);
\draw[arrow] (plus2) -- (y1);
\draw[arrow] (starty1) -- (psi_backward);
\draw[arrow] (psi_backward) -- (minus1);
\draw[arrow] (z0) -- (minus1);
\draw[arrow] (minus1) -- (z1);

\node at (7, 2) {\textbf{(b)}};

\node (y0) at (8,2) {$y_n$};
\node (z0) at (8,0) {$z_n$};
\node[func, fill=red!20, draw=red!80] (psi_forward) at (11,1) {$\Psi_h$};
\node[sum] (plus1) at (9.3, 2) {$-$};
\node[sum] (plus2) at (10.5,2) {$-$};
\node[circle, fill=black, inner sep=1.5pt] (startz1) at (10.3, 0) {};
\node[circle, fill=black, inner sep=1.5pt] (startz2) at (11.5, 0) {};
\node[circle, fill=black, inner sep=1.5pt] (starty1) at (13, 2) {};
\node[func, fill=blue!20, draw=blue!80] (psi_backward) at (12.5,1) {$\Psi_{\scalebox{0.75}[1.0]{-}h}$};
\node[sum] (minus1) at (12,0) {$+$};
\node (z1) at (14,0) {$z_{n+1}$};
\node (y1) at (14,2) {$y_{n+1}$};

\draw[arrow] (startz2) -- (psi_forward);
\draw[arrow] (plus1) -- (y0) node[pos=0.5, above] {$\times \lambda^{-1}$};
\draw[arrow] (plus2) -- (plus1);
\draw[arrow] (startz1) -- (plus1) node[pos=0.5, left] {$\times (1-\lambda)$};
\draw[arrow] (psi_forward) -- (plus2);
\draw[arrow] (y1) -- (plus2);
\draw[arrow] (starty1) -- (psi_backward);
\draw[arrow] (psi_backward) -- (minus1);
\draw[arrow] (minus1) -- (z0);
\draw[arrow] (z1) -- (minus1);

\end{tikzpicture}
    \caption{Computation graph of the reversible method. \textbf{(a)} Forward solve. \textbf{(b)} Backward solve.}
    \label{fig:comp-graph}
\vskip 0.2in
\end{figure*}

\subsection{Contributions}
In this work we address both the low-order convergence and stability issues of previous reversible solvers for Neural ODEs.

We present a general class of algebraically reversible solvers that allows any single-step numerical solver to be made reversible. This class of reversible solvers calculate exact gradients and display the following properties:
\begin{enumerate}
    \item $O(n)$ time, $O(1)$ memory complexity (Algorithm \ref{alg:backprop}),
    \item high-order convergence (Theorem \ref{thm:convergence}),
    \item improved numerical stability (Theorem \ref{thm:stability}).
\end{enumerate}

\section{Reversible Solvers}
We introduce a class of reversible solvers where any single-step numerical solver can be made reversible. The algebraic reversibility property allows one to dynamically recompute the forward solve in closed form during backpropagation, thereby obtaining exact gradients in $O(n)$ time and $O(1)$ memory.

We will write our Neural ODE system as
\begin{equation}
    \label{eq:neural-ode}
    \frac{\dup y}{\dup t}(t) = f_\theta(t, y(t)), \hspace{8mm} y(0) = y_0,
\end{equation}
where $y(t)\in \mathbb{R}^d$ is the solution to \eqref{eq:neural-ode} and $f_\theta: \mathbb{R} \times \mathbb{R}^d \rightarrow \mathbb{R}^d$ is a neural network parameterized by $\theta \in \mathbb{R}^m$. The initial condition is given by $y_0\in \mathbb{R}^d$.

\subsection{Forward solve}
Suppose we have a single-step numerical ODE solver, $\Psi_h(t, y) : \mathbb{R} \times \mathbb{R}^d \rightarrow \mathbb{R}^d$, such that a numerical step is given by $y_{n+1}=y_n+\Psi_h(t, y)$, where $h$ is the step size. Then we construct a reversible numerical solution $\{y_n, z_n\}_{n\geq 0}$ by
\begin{equation}
    \label{eq:reversible-solver-forward}
    \begin{aligned}
        y_{n+1} &= \lambda y_n+(1-\lambda)z_n + \Psi_h(t_n, z_n), \\
        z_{n+1} &= z_n - \Psi_{-h}(t_{n+1}, y_{n+1}),
    \end{aligned}
\end{equation}
where $\lambda\in(0, 1]$ is a coupling parameter. The initial state is given by $y_0=z_0=y(0)$ and $t_n=nh$ for each $n\geq 0$.

\subsection{Backward solve}
The numerical scheme in \eqref{eq:reversible-solver-forward} is algebraically reversible, with the reverse scheme given by
\begin{equation}
    \label{eq:reversible-solver-backward}
    \begin{aligned}
        z_n &= z_{n+1}+\Psi_{-h}(t_{n+1}, y_{n+1}), \\
        y_n &= \lambda^{-1}y_{n+1} + \left(1-\lambda^{-1}\right)z_n - \lambda^{-1}\Psi_h(t_n, z_n).
    \end{aligned}
\end{equation}
This construction allows us to step between $(y_n, z_n) \leftrightarrow (y_{n+1}, z_{n+1})$ in closed form. It is this property that results in exact gradient calculation as we can dynamically recompute the forward computation graph exactly.

The above construction of reversibility shares familiarities with other reversible architectures in machine learning. For example, we see a similar coupling between the evolving state in Reversible Networks (RevNets) \cite{gomez2017reversible} and in coupling layers for normalizing flows \cite{dinh2015nice}. However, the coupling parameter $\lambda$ is absent from these architectures; we show later that $\lambda$ is essential for numerical stability when solving ODEs (see Theorem \ref{thm:stability}).

\subsection{Convergence}
The convergence order of the reversible solver is inherited from the base solver $\Psi$. That is, provided $\Psi$ is a method with $k$-th order convergence then the reversible scheme will also have $k$-th order convergence. This result is illustrated by Theorem \ref{thm:convergence}.

For example, we could choose $\Psi$ to be a step of a fourth order Runge-Kutta method (RK4). In which case, the reversible solver in \eqref{eq:reversible-solver-forward} would correspond to a reversible RK4 method and also display fourth order convergence.

We therefore see that a reversible solver of arbitrarily high order may be constructed from this method. This is in contrast to previous reversible methods where the convergence order is baked in.

\vskip 0.1in
\begin{theorem}
\label{thm:convergence}
For a fixed time-horizon $T>0$, we consider the Neural ODE in \eqref{eq:neural-ode} over $[0, T]$. Let $T=Nh$ where $N>0$ denotes the number of steps and $h>0$ is the step size. 

Let $\Psi$ be a $k$-th order ODE solver satisfying the Lipschitz condition (see \ref{def:lip-solver}) and consider the reversible solution $\{y_n, z_n\}_{n\geq 0}$, given by \eqref{eq:reversible-solver-forward}. Then there exist constants $h_\text{max}, C>0$ such that, for $h\in[0, h_\text{max}]$,
\begin{equation}
    ||y_n - y(t_n)|| \leq Ch^k.
\end{equation}
\end{theorem}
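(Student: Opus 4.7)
The plan is a standard local-to-global error analysis in the Lady Windermere's fan style. Define the per-step errors $e_n^y = y_n - y(t_n)$ and $e_n^z = z_n - y(t_n)$, which are both zero at $n = 0$. I would first bound the one-step consistency error assuming exact inputs, and then propagate the errors using the Lipschitz property of $\Psi$ together with a discrete Grönwall argument.

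For the local truncation, suppose $y_n = z_n = y(t_n)$. The forward update reduces to $y_{n+1} = y(t_n) + \Psi_h(t_n, y(t_n))$, which differs from $y(t_{n+1})$ by $O(h^{k+1})$ since $\Psi$ is $k$-th order. For the $z$-update, the same $k$-th order consistency applied with step size $-h$ at $y(t_{n+1})$ gives $y(t_{n+1}) + \Psi_{-h}(t_{n+1}, y(t_{n+1})) - y(t_n) = O(h^{k+1})$; combining this with the Lipschitz estimate in the second argument of $\Psi_{-h}$ to absorb the $O(h^{k+1})$ discrepancy between $y_{n+1}$ and $y(t_{n+1})$ then yields $\|z_{n+1} - y(t_{n+1})\| = O(h^{k+1})$ as well.

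For the global propagation, subtracting the true solution from \eqref{eq:reversible-solver-forward} and applying the Lipschitz bounds on $\Psi_h$ and $\Psi_{-h}$ produces
\begin{align*}
\|e_{n+1}^y\| &\le \lambda\|e_n^y\| + (1-\lambda + Lh)\|e_n^z\| + C_1 h^{k+1}, \\
\|e_{n+1}^z\| &\le \|e_n^z\| + Lh\|e_{n+1}^y\| + C_1 h^{k+1}.
\end{align*}
Substituting the first bound into the second eliminates the implicit coupling, so that $M_n := \max(\|e_n^y\|, \|e_n^z\|)$ satisfies a recurrence of the form $M_{n+1} \le (1 + C_2 h) M_n + C_3 h^{k+1}$. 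Discrete Grönwall over $n \le T/h$ then yields $M_n \le Ch^k$ with $C$ depending only on $L$, $T$, and $\lambda$, and in particular $\|y_n - y(t_n)\| \le Ch^k$.

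The main obstacle is the implicit coupling in the scheme: $z_{n+1}$ depends on $y_{n+1}$ rather than $y_n$, so a naive decoupled induction does not immediately close. Fortunately this coupling enters only through $\Psi_{-h}$ and hence carries an $O(h)$ factor from the Lipschitz constant, so the substitution step is routine and contributes only $O(h^2)$ corrections that can be absorbed into the Grönwall constant, provided $h \le h_\text{max}$ for some sufficiently small $h_\text{max}$. A secondary point worth checking is that the constants do not degenerate as $\lambda \to 0$; since $\lambda \in (0,1]$ and $\lambda^{-1}$ only appears in the backward solve \eqref{eq:reversible-solver-backward} and not in the forward convergence analysis, this is not an issue here.
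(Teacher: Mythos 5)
Your argument is correct, but it follows a genuinely different route from the paper's. You run the standard Lady Windermere's fan analysis directly on the pair $e_n^y = y_n - y(t_n)$, $e_n^z = z_n - y(t_n)$: the two one-step inequalities you display are right (the key observations being that $\Psi_{-h}$ applied at $y(t_{n+1})$ is $k$-th order consistent with the time-reversed flow, so $\|y(t_n) - y(t_{n+1}) - \Psi_{-h}(t_{n+1}, y(t_{n+1}))\| = O(h^{k+1})$, and that the implicit dependence of $z_{n+1}$ on $y_{n+1}$ carries an $O(h)$ Lipschitz factor, so substitution only perturbs the Gr\"onwall constant). Since $\lambda + (1-\lambda) = 1$, the row sums of the coupling are $1 + O(h)$ and $M_n = \max(\|e_n^y\|, \|e_n^z\|)$ satisfies $M_{n+1} \le (1 + Ch)M_n + C'h^{k+1}$, which closes the proof; as you observe, nothing degenerates as $\lambda$ varies in $(0,1]$. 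The paper instead argues in three stages: a lemma bounding the forward--backward near-cancellation $\|\Psi_h(x + \Psi_{-h}(x)) + \Psi_{-h}(x)\| \le C_3|h|^{k+1}$; a lemma showing the two components stay close, $\|y_n - z_n\| \le C_4 h^k$; and a Gr\"onwall argument on the weighted state $x_n = \lambda^{N-n}y_n + (1-\lambda^{N-n})z_n$, chosen because it telescopes exactly under the scheme, $x_{n+1} = x_n + \lambda^{N-n-1}\Psi_h(z_n) - (1-\lambda^{N-n-1})\Psi_{-h}(y_{n+1})$, so the $\lambda$-dependence drops out of the leading error term. Your version is shorter and needs neither auxiliary lemma; the paper's version yields the standalone estimate that $y_n$ and $z_n$ remain $O(h^k)$-close and isolates the forward--backward composition that the reversibility mechanism rests on. Both give the same $O(h^k)$ rate with constants of the same $e^{O(C_2 T)}$ type.
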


\begin{proof}
    The proof of Theorem \ref{thm:convergence} is given in Appendix \ref{app:convergence}.
\end{proof}

\begin{figure}
    \centering
    \includegraphics[width=\linewidth]{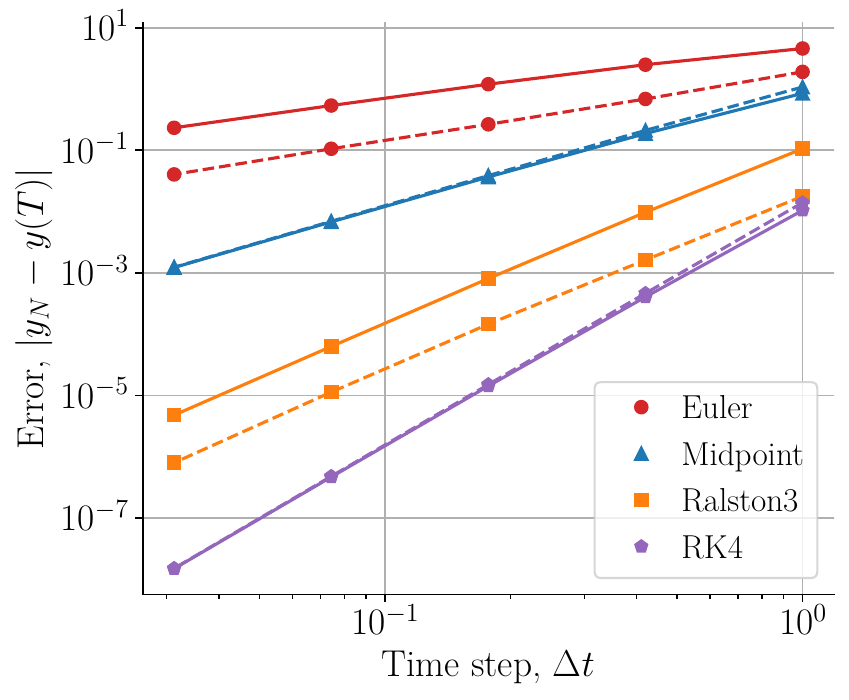}
    \caption{Reversible solver convergence (dashed) is inherited from base solver (solid).}
    \label{fig:convergence}
\end{figure}

\subsection{Backpropagation Algorithm}
We now discuss the key result of reversible solvers: exact gradient backpropagation in $O(n)$ time and $O(1)$ memory. The complexity results from dynamically recomputing the forward solve during backpropagation, requiring only the terminal solver state be stored in memory.

The backpropagation algorithm through one solver step is presented in Algorithm \ref{alg:backprop} and follows the construction of reverse-mode automatic differentiation \cite{baydin2018automatic}.

\vskip 0.1in

\begin{algorithm}
\caption{Reversible backpropagation}\label{alg:backprop}
    \begin{algorithmic}
        \STATE {\bfseries Input:} $t_{n+1}, y_{n+1}, z_{n+1}, \bar{y}_{n+1}, \bar{z}_{n+1}, \bar{\theta}$ 
        \vspace{1ex}
        \STATE \# Backward step
        \STATE $z_n = z_{n+1}+\Psi_h(t_{n+1}, y_{n+1})$
        \vspace{1ex}
        \STATE $y_n = \lambda^{-1}y_{n+1} + \left(1-\lambda^{-1}\right)z_n - \lambda^{-1}\Psi_h(t_n, z_n)$
        \vspace{1ex}
        \STATE $t_n = t_{n+1} - h$
        \vspace{2ex}
        \STATE \# Gradients
        \STATE $\bar{y}_{n+1} \leftarrow \bar{y}_{n+1} - \bar{z}_{n+1}\displaystyle\frac{\partial \Psi_{-h}(t_{n+1}, y_{n+1})}{\partial y_{n+1}}$
        \STATE $\bar{y}_n = \lambda\bar{y}_{n+1}$
        \STATE $\bar{z}_n = \bar{z}_{n+1} + (1-\lambda)\bar{y}_{n+1} + \bar{y}_{n+1}\displaystyle\frac{\partial \Psi_h(t_n, z_n)}{\partial z_n}$
        \STATE $\bar{\theta} \leftarrow \bar{\theta}-\bar{z}_{n+1}\displaystyle\frac{\partial \Psi_{-h}(t_{n+1}, y_{n+1})}{\partial \theta}  + \bar{y}_{n+1}\displaystyle\frac{\Psi_h(t_n, z_n)}{\partial \theta}$
        \STATE {\bfseries Return:} $t_n, y_n, z_n, \bar{y}_n, \bar{z}_n, \bar{\theta}$
    \end{algorithmic}
\end{algorithm}

\vskip 0.1in

We define a scalar-valued loss function $L(y_N) : \mathbb{R}^d \rightarrow \mathbb{R}$ on the terminal value of the numerical solve $y_N \approx y(T)$. It is possible to define a loss function on any subset of the numerical solution $\{y_n\}_{0\leq n \leq N}$ but we choose to depend on $y_N$ for ease of presentation. 

The gradients (adjoints) are defined as $\bar{v}=\partial L(y_N)/\partial v$. Note that the presence of an adjoint to the left of a Jacobian, $\bar{v} \frac{\partial f}{\partial x}$, indicates a vector-Jacobian product. This can be efficiently computed using automatic differentiation.

The algorithm proceeds by first taking a backward step from $n+1$ to $n$, recovering the state at $n$ exactly. Then we backpropagate the adjoints from $n+1$ to $n$ by tracing the computation graph of the forward solve, applying the chain rule as we go. We write the step size as $h$ which can be a constant or selected by an adaptive time stepping algorithm.

The vector-Jacobian products in Algorithm \ref{alg:backprop} are computable for an arbitrary (differentiable) step function $\Psi$. This product requires no knowledge on the structure of $\Psi$ a-priori and no local forward evaluations as $\Psi$ is evaluated on the backward solve.

\subsection{Stability}
One key ingredient missing from previous reversible solvers is numerical stability. Here we show that the reversible scheme introduced above in \eqref{eq:reversible-solver-forward} has a non-zero linear stability region for $\Psi$ given by any explicit Runge-Kutta method. This is shown by Theorem \ref{thm:stability}.

We prove stability, as is customary, for the real-valued linear test problem \cite{stewart2022numerical}. 

\vskip 0.1in

\begin{definition}[Linear Stability]
    \label{def:linear-stability}
    Let $y\in \mathbb{R}^d$ and consider the ODE,
    \begin{equation}
        \label{eq:linear-stability}
        \frac{\dup y}{\dup t} = \alpha y, \hspace{8mm} y(0) = y_0,
    \end{equation}
    where $\alpha < 0$ is a negative real constant and $y_0 \in \mathbb{R}^d$ is a non-zero initial condition. A numerical solution $\{y_n\}_{n\geq 0}$ to \eqref{eq:linear-stability} is linearly stable if the spectral radius (maximum eigenvalue) $\rho(T)<1$, where $T\in \mathbb{R}^{d \times d}$ is given by
    \begin{equation}
        y_{n+1} = Ty_n.
    \end{equation}
\end{definition}

\vskip 0.1in

This definition expresses the condition that a numerical method applied to \eqref{eq:linear-stability} should decay to zero as $t\rightarrow\infty$. For this to hold we require that the maximum eigenvalue of $T<1$. Note that for the reversible scheme in \eqref{eq:reversible-solver-forward} we require that the pair $\{y_n, z_n\}$ tends to zero.

The theorem is simplified by the concept of transfer functions for Runge-Kutta methods, such that we can write $\Psi_h(t_n, y_n)=R(h\alpha)y_n$ for a `transfer function' $R(h\alpha)$ with step size $h$. This is a commonly used result in stability analysis, see \cite{stewart2022numerical} for a complete discussion.

\newpage

\begin{theorem}
    \label{thm:stability}
    Let $\Psi$ be given by an explicit Runge-Kutta solver. Then the reversible numerical solution $\{y_n, z_n\}_{n\geq 0}$ given by \eqref{eq:reversible-solver-forward} is linearly stable iff
    \begin{equation}
        \label{eq:stability-condition}
        |\Gamma| < 1+\lambda,
    \end{equation}
    where
    \begin{equation}
        \Gamma = 1+\lambda - (1-\lambda)R(-h\alpha)-R(-h\alpha)R(h\alpha).
    \end{equation}
\end{theorem}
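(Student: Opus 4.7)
The plan is to reduce Definition \ref{def:linear-stability} to a spectral question about a $2\times 2$ transition matrix and then invoke a root-location criterion for real quadratics. First, I would substitute the linear test problem \eqref{eq:linear-stability} and the transfer-function representation $\Psi_h(t_n, y_n) = R(h\alpha)\,y_n$ into the reversible scheme \eqref{eq:reversible-solver-forward}. Writing $a := R(h\alpha)$ and $b := R(-h\alpha)$ for brevity and eliminating $y_{n+1}$ from the $z$-update, the pair $(y_n, z_n)$ satisfies $(y_{n+1}, z_{n+1})^{\top} = T\,(y_n, z_n)^{\top}$ with
\[
T = \begin{pmatrix} \lambda & 1-\lambda+a \\ -\lambda b & 1 - b(1-\lambda+a) \end{pmatrix}.
\]

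Next, I would read off the trace and determinant directly. The two $b(1-\lambda+a)$ contributions to $\det T$ cancel, leaving $\det T = \lambda$, while $\mathrm{tr}(T) = \lambda + 1 - (1-\lambda)b - ab$, which is exactly the quantity $\Gamma$ defined in the statement. Hence the characteristic polynomial of $T$ is $p(\mu) = \mu^2 - \Gamma\mu + \lambda$. Applying the Schur--Cohn criterion to this real monic quadratic, both roots lie strictly inside the unit disk iff $p(1) > 0$, $p(-1) > 0$, and $|p(0)| < 1$, equivalently iff $|\lambda| < 1$ and $|\Gamma| < 1 + \lambda$. Since $\lambda \in (0,1]$ and the determinantal condition $|\det T| < 1$ is automatically forced by any stability conclusion, the sharp condition collapses to $|\Gamma| < 1 + \lambda$, giving both directions of the claimed iff.

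The only real work is bookkeeping. The main obstacle I anticipate is making the cancellation that gives $\det T = \lambda$ transparent and stating the Schur--Cohn test cleanly enough for a real monic quadratic that both directions of the iff follow at a glance. Beyond those two points, no deep analytic ingredient enters: the argument rests entirely on the linearity of an explicit Runge--Kutta increment on the scalar test problem and a textbook root-location lemma.
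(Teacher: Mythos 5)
Your proposal is correct and follows essentially the same route as the paper: both reduce the scheme on the linear test problem to the $2\times 2$ transition matrix with $\det T = \lambda$ and $\mathrm{tr}\, T = \Gamma$, yielding the characteristic quadratic $\mu^2 - \Gamma\mu + \lambda$. The only difference is cosmetic --- you invoke the Schur--Cohn conditions $p(1)>0$, $p(-1)>0$, $|p(0)|<1$ directly, whereas the paper derives the identical three inequalities via the bilinear map $e_k \mapsto (e_k+1)/(e_k-1)$ followed by the Routh--Hurwitz test; note that both arguments (yours and the paper's) quietly require $\lambda < 1$ strictly, so the stated iff really holds for $\lambda\in(0,1)$ rather than $(0,1]$.
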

\begin{proof}
    The proof of Theorem \ref{thm:stability} is given in Appendix \ref{app:stability}.
\end{proof}

\vskip 0.1in

\begin{example*} Theorem \ref{thm:stability} is best illustrated by an example. Consider using Euler's method for the base solver $\Psi$, then $R(h\alpha)=h\alpha$. Substituting into the stability condition \eqref{eq:stability-condition}, we get $h\alpha > \lambda - 1$. As $\lambda\rightarrow 1$, the stability region tends to zero. And, as $\lambda\rightarrow 0$ we recover half the stability region of the base Euler solver.
\end{example*}

This example illustrates some key points. The stability region of reversible $\Psi$ is decreased in comparison to the stability region of $\Psi$, where $\Psi$ is the base solver. However, the stability region of the reversible solver is non-zero and for any coupling parameter $\lambda\in(0, 1)$ there exists a step size $h$ such that the reversible scheme is linearly stable. 

This is in contrast to previous reversible architectures, ALF \cite{zhuang2021mali} and reversible Heun \cite{kidger2021efficient}, that are nowhere linearly stable for any step size $h$.

We see from Theorem \ref{thm:stability} that the choice of coupling parameter $\lambda$ is key to the reversible method -- stability is increased on the forward solve by decreasing $\lambda$. However, decreasing $\lambda$ will reduce the numerical stability of the backward solve due to the presence of $\lambda^{-1}$ terms. To balance the stability it is therefore important that $\lambda$ remains close to $1$ and in practice we find $\lambda\in[0.99, 0.999]$ to be good choices.

\subsection{Adaptive step sizes}
It is straightforward to extend the reversible solvers to use adaptive step sizes. This follows naturally from the construction of the forward step \eqref{eq:reversible-solver-forward} due to the dependence on the base solver step $\Psi$. That is, if the base solver $\Psi$ is capable of providing local error estimates then the reversible method can inherit the estimate. 

This allows us to obtain an adaptive reversible solver from an adaptive base solver. We will use an adaptive reversible solver in experiment \ref{subsec:pendulum} to handle chaotic system dynamics.

We choose to inherit the error estimate provided by the local forward step, $\Psi_h$. It is also possible to include the error estimate provided by the local backward step, $\Psi_{-h}$. In general, two error estimates are available per step at $t_n$ and $t_{n+1}$ which may prove useful for stiff problems.

\section{Experiments}
We perform three experiments that compare the accuracy, runtime and memory cost of reversible solvers to recursive checkpointing. The experiments focus on scientific modeling and discovery of latent dynamics from data, an application where Neural ODEs appear likely to have the highest impact.

First, we consider discovery of Chandrasekhar's White Dwarf equation from generated data. Second, we investigate a real-data coupled oscillator system \cite{schmidt2009distilling}. Finally, we extend the reversible solvers to manage chaotic system dynamics using adaptive step sizes on a real-data double pendulum experiment.

\begin{table*}[t]
\centering
\caption{Memory usage and runtime for Chandrasekhar experiment. Mean $\pm$ standard deviation over three repeats.}
\vskip 0.1in
\begin{tabular}{l|c|c c c c}
Method & \shortstack{Memory\\(checkpoints)} & \multicolumn{4}{c}{\shortstack{Runtime\\(min)}}  \\ \hline
 & & Euler & Midpoint & Ralston3 & RK4 \\
\hline
\textbf{Reversible} & $\mathbf{2}$ & $\mathbf{1.5 \pm 0.3}$ & $\mathbf{1.7 \pm 0.4}$ & $\mathbf{1.8 \pm 0.2}$ & $\mathbf{2.2 \pm 0.3}$ \\ 
\hline
\multirow[c]{5}{*}{Recursive} & $2$ & $264.0 \pm 17.5$ &  $280.0 \pm 13.7$ &  $259.9 \pm 22.9$ &  $254.3 \pm 16.1$ \\ 
 & $4$ & $30.5 \pm 2.3$ &  $30.3 \pm 1.6$ &  $29.8 \pm 2.2$ &  $29.2 \pm 1.8$ \\ 
 & $8$ & $10.7 \pm 1.4$ &  $10.6 \pm 1.1$ &  $10.0 \pm 0.4$ &  $9.9 \pm 0.3$ \\ 
 & $16$ & $8.9 \pm 1.2$ &  $9.6 \pm 0.6$ &  $8.4 \pm 0.2$ &  $8.2 \pm 0.2$ \\ 
 & $32$ & $8.1 \pm 0.8$ &  $8.7 \pm 0.7$ &  $7.6 \pm 0.2$ &  $7.5 \pm 0.2$ \\ 
 & $44$ & $5.2 \pm 0.7$ &  $5.5 \pm 0.8$ &  $4.90 \pm 0.04$ &  $4.95 \pm 0.02$ \\ 
\end{tabular}

\label{tab:chandrasekhar}
\end{table*}

\subsection{Chandrasekhar's White Dwarf Equation}
The first task we consider is discovery of Chandrasekhar's white dwarf equation from generated data. The system describes the density of a white dwarf $\varphi$ as a function of the radial distance $r$ from the center of the star. The density follows the second-order ODE,
\begin{align*}
    \frac{1}{r^2}\frac{\dup}{\dup r}&\left(r^2\frac{\dup \varphi}{\dup r}\right) + \left(\varphi^2 - C\right)^{3/2}=0, \\
    &\varphi(0)=1, \hspace{2mm} \frac{\dup\varphi}{\dup r}(0)=0,
\end{align*}
where $\sqrt{C}$ is a constant that defines the lower bound on the density $\varphi$. The system provides a strong test case for Neural ODEs as the vector field is both non-linear and `time-dependent'.

We generate training data by simulating the white dwarf system over $r\in [0, 5]$ with $C=0.001$. The task is to learn the dynamics of the white dwarf system. 

We parameterize the Neural ODE in \eqref{eq:neural-ode} with a feed-forward neural network vector field. The model is trained to minimise the mean-squared-error. Gradient calculation is performed by both the reversible method and recursive checkpointing for a range of numerical solvers: Euler, Midpoint, Ralston’s 3rd order method (Ralston3) and the classic Runge-Kutta 4 method (RK4).

The runtime and memory usage of both training methods is shown in Table \ref{tab:chandrasekhar}. We compare to recursive checkpointing over a range of checkpoints $c$ up to the optimal memory/runtime trade-off $c\sim\sqrt{n}$. Note that the memory usage of the reversible solver is equivalent to 2 checkpoints as the method must store the augmented state $\{y_n, z_n\}$.

We see from Table \ref{tab:chandrasekhar} that reversible backpropagation is at least $2.5\times$ faster than recursive checkpointing while using $22\times$ less memory. Further, reversible backpropagation is at least $100\times$ faster for the same memory usage. These results hold across all solvers considered. The mean final loss obtained is the same for both methods, $0.9\times 10^{-4}$, due to exact gradient calculation.

\begin{table*}[t]
\centering
\caption{Memory usage, runtime and final loss for coupled oscillator experiment. Mean $\pm$ standard deviation over three repeats.}
\vskip 0.1in
\begin{tabular}{l|c | c c c | c c c}
Method & \shortstack{Memory\\(checkpoints)} & \multicolumn{3}{c}{\shortstack{Runtime\\(min)}}  & \multicolumn{3}{c}{\shortstack{Loss\\($\times 10^{-3}$)}} \\ \hline
 & & Midpoint & Ralston3 & RK4 & Midpoint & Ralston3 & RK4 \\
\hline
Reversible & $\mathbf{2}$ & $\mathbf{14.3 \pm 3.1}$ & $\mathbf{13.2 \pm 0.3}$ & $\mathbf{19.7 \pm 6.6}$ & $\mathbf{1.0 \pm 0.2}$ &  $1.3 \pm 0.3$ &  $\mathbf{1.2 \pm 0.4}$ \\
\hline
\multirow[c]{5}{*}{Recursive} & $2$ & $632.2 \pm 20.0$ &  $645.2 \pm 7.2$ &  $654.9 \pm 0.8$ & \multirow[c]{5}{*}{$1.0 \pm 0.2$} & \multirow[c]{5}{*}{$\mathbf{1.2 \pm 0.1}$} & \multirow[c]{5}{*}{$1.4 \pm 0.3$} \\ 
 & $4$ & $99.0 \pm 10.7$ &  $95.7 \pm 1.9$ &  $97.5 \pm 0.1$ & & &\\ 
 & $8$ & $63.4 \pm 9.8$ &  $57.5 \pm 1.7$ &  $58.1 \pm 0.2$ & & &\\ 
 & $16$ & $53.8 \pm 8.8$ &  $48.7 \pm 2.8$ &  $48.6 \pm 0.1$ & & &\\ 
 & $31$ & $36.6 \pm 7.9$ &  $30.8 \pm 2.6$ &  $30.8 \pm 0.1$ & & &\\  
\end{tabular}
\label{tab:oscillator}
\end{table*} 

\subsection{Coupled oscillators}
The second experiment we consider investigates the performance of the reversible method for identifying dynamics from real-system data. We use the dataset from \cite{schmidt2009distilling}, designed to facilitate research in discovery of physical laws from data. Specifically, we first consider the real coupled oscillator system; the task is to learn a Neural ODE that approximates the dynamics.

The coupled oscillator data was sampled over $t\in[0, 3]$. The Neural ODE model was parameterised with a feed-forward neural network vector field and trained using both the reversible method and recursive checkpointing for a range of numerical solvers. The runtime, memory usage and final loss obtained is compared for both methods in Table \ref{tab:oscillator}.

We see that the reversible method is more that $2 \times$ faster on average than recursive checkpointing while using $15.5\times$ less memory. Further, for the same memory usage the reversible method is more than $40\times$ faster. The final loss obtained is also comparable across both methods.

\begin{table*}[b]
\centering
\caption{Memory usage, runtime, final loss and number of adaptive solver steps for double pendulum experiment. Mean $\pm$ standard deviation over five repeats.}
\vskip 0.1in
\begin{tabular}{l|c | c | c | c}
Method & \shortstack{Memory\\(checkpoints)} & \shortstack{Runtime\\(min)}  & \shortstack{Loss\\($\times 10^{-3}$)} & \shortstack{Solver steps\\(mean)} \\ \hline
Reversible & $\mathbf{2}$ &  $\mathbf{21.9 \pm 2.1}$ & $8.3 \pm 3.2$ & $445 \pm 14$ \\
\hline
\multirow[c]{5}{*}{Recursive} & $2$ &  $818.2 \pm 21.5$ & $9.5 \pm 2.0$ & $422 \pm 7$ \\ 
 & $4$ &  $135.2 \pm 7.1$ & $8.6 \pm 1.9$ & $417 \pm 7$ \\ 
 & $8$ &  $82.8 \pm 1.2$ & $12.8 \pm 7.4$ & $434 \pm 14$ \\ 
 & $16$ &  $70.8 \pm 3.7$ & $\mathbf{7.8 \pm 1.3}$ & $432 \pm 11$ \\ 
 & $32$ &  $62.4 \pm 2.7$ & $7.9 \pm 1.6$ & $429 \pm 10$ \\ 
\end{tabular}
\label{tab:pendulum}
\end{table*} 

\subsection{Chaotic double pendulum}
\label{subsec:pendulum}
Lastly, we consider identification of chaotic non-linear dynamics using the (real-system) chaotic double pendulum dataset from \cite{schmidt2009distilling}. We use this experiment to test the performance of adaptive reversible solvers. 

A feed-forward Neural ODE is trained to identify the dynamics of the double pendulum system. We use the adaptive solver Bogacki-Shampine $3/2$ (Bosh3) that contains an embedded second order method to provide local error estimates \cite{bogacki19893}.

The reversible Bosh3 method is compared to recursive checkpointing for a range of checkpoints $c$. For adaptive step sizes, the optimal number of checkpoints is unknown a-priori; we therefore choose to vary up to $c=32$. This guarantees that we are in the regime $c>\sqrt{n}$ to ensure $O(n\log n)$ runtime. The memory usage, runtime and best loss achieved is shown in Table \ref{tab:pendulum}.

We see from Table \ref{tab:pendulum} that reversible backpropagation is at least $2.9\times$ faster than recursive checkpointing while using $16\times$ less memory. For the same memory usage the reversible method is $37\times$ faster. The final loss is comparable across both methods.

Further, it is interesting to see that mean number of steps taken with the adaptive solver is comparable across both methods. One might have expected the reversible method to require a larger number of steps to achieve the same numerical error due to the decreased stability in comparison to the base solver. However, that is not observed in this experiment.

\section{Discussion}
\subsection{Implementation}
An implementation of the reversible solver method was written in JAX \cite{jax2018github}. The code can be found at \url{https://github.com/sammccallum/reversible-solvers}. This is part of ongoing work to implement reversible solvers in the popular differential equation solving library Diffrax \cite{kidger2022neural}.

\subsection{Performance improvements}
It was found that adding weight decay to the neural network vector field parameters improves numerical stability for both the reversible method and recursive checkpointing. This result has been previously observed in the neural ODE literature \cite{grathwohl2018ffjord}.

\subsection{Limitations}
The forward solve of the reversible method requires twice the computational cost of the base solver. However, this additional cost is overshadowed by the reduced cost of backpropagation, such that overall the reversible method is significantly faster than recursive checkpointing.

The reversible method can be less numerically stable than the recursive checkpointing algorithm. While the numerical stability of the reversible solvers presented is a strict improvement over previous reversible architectures, stability remains an area to be further improved.

\subsection{Future Work}
\subsubsection{Neural CDEs and SDEs}
The reversible method presented naturally extends to solving Neural CDEs and SDEs -- where the base solver step $\Psi$ is modified accordingly. For example, letting $\Psi$ correspond to the Euler-Heun method for solving Stratonovich SDEs then the reversible method in \eqref{eq:reversible-solver-forward} would correspond to a reversible Euler-Heun solver. These applications will be explored in future work.

\subsubsection{Partial Differential Equations}
The spatial finite difference approximation to PDEs results in a set of coupled ODEs. The number of ODEs is equal to the size of the discretization grid and the forward computation graph can therefore incur high memory cost \cite{morton2005numerical}. Application of reversible solvers to discretized PDEs would significantly decrease this memory cost and improve runtime.

\subsubsection{Implicit Models}
Neural ODEs have recently been situated in the larger model class of Implicit models. For example, Deep Equilibrium Models are an example of an implicit model where the solution is specified as the fixed/equilibrium point of some system \cite{bai2019deep}. 

Implicit models require a numerical scheme to determine the solution. If one can devise a numerical scheme that is algebraically reversible, then the improved runtime and memory-efficiency of reversible differential equation solvers may be extended to other implicit models.

\section*{Conclusion}
A class of reversible ODE solvers was introduced that significantly outperform recursive checkpointing algorithms for training Neural ODEs on both runtime and memory usage. The reversible method presented also strictly improves upon the stability properties and convergence order of previous reversible solvers. Further, the reversible method naturally handles adaptive step sizes and extends to the wider family of neural differential equations.


\bibliography{main}
\bibliographystyle{icml2025}

\newpage
\appendix
\onecolumn
\section{Proof of convergence (Theorem \ref{thm:convergence})}
\label{app:convergence}
We will consider the following ODE,
\begin{align}\label{eq:ode}
\frac{dy}{dt} & = f(y),\\[3pt]
y(0) & = y_0, \nonumber
\end{align}
where the state $(t, y)$ has been concatenated to $y\in \bR^n$ without loss of generality. The vector field is given by the Lipschitz function $f: \bR^n \rightarrow \bR^n$ and the initial condition is $y_0\in \bR^n$.

We start by defining the base solver $\Psi$ and associated error analysis. Throughout, the norm $\big\| \cdot \big\|$ corresponds to the Euclidean norm $\big\| \cdot \big\|_2$. \\

\begin{definition}
For $h_{\max} > 0$, consider the map $\Psi$ given by\label{def:order_solver}
\begin{align*}
\Psi : [-h_{\max}, h_{\max}]\times \bR^n & \rightarrow \bR^n,\\
(h, x) & \mapsto \Psi_h(x).
\end{align*}
We say that $\Psi$ is an ODE solver with order $\alpha > 0\,$ if there exists a constant $C_1 > 0$ such that
\begin{align}\label{eq:ode_solver_eq}
\big\|x(h) - \big(x + \Psi_h(x)\big)\big\| & \leq C_1 |h|^{\alpha + 1},
\end{align}
for $h\in [-h_{\max}, h_{\max}]\,$ where $x(h)$ denotes the solution $y(t)$ at time $t = |h|$ of the ODE:
\begin{align*}
y^\prime & = \mathrm{sgn}(h)\cdot f(y),\\[3pt]
y(0) & = x.
\end{align*}
\end{definition}

\begin{definition}
    \label{def:lip-solver}
    Let $\Psi$ denote an order $\alpha$ solver for an ODE governed by a Lipschitz vector field $f:\bR^n\rightarrow\bR^n$. We say that $\Psi$ is an ODE solver satisfying the Lipschitz condition if there exists a constant $C_2 > 0$ such that
    \begin{equation}
        \big\|\Psi_h(x)-\Psi_h(y)\big\| \leq C_2|h|\big\|x-y\big\|,
    \end{equation}
    for all $x, y \in \bR^n$ and $h\in [-h_{\max}, h_{\max}]$. In particular, this may be written as
    \begin{equation}
        \big\|\Psi_h\big\|_{\text{Lip-}1} \leq C_2|h|.
    \end{equation}
\end{definition}

We note that the Lipschitz condition defined in \ref{def:lip-solver} is true for all explicit Runge-Kutta methods; this result is shown in Section $6.1.4.2$ of \cite{stewart2022numerical}.

\begin{lemma}
Let $\Psi$ denote an order $\alpha$ ODE solver satisfying the Lipschitz condition. Then there exists a constant $C_3 > 0$ such that \label{lemma:forward_backward}
\begin{align*}
\big\|\Psi_h(x +  \Psi_{-h}(x)) +  \Psi_{-h}(x)\big\| &\leq C_3|h|^{\alpha+1},
\end{align*}
for $h\in[-h_{\max}, h_{\max}]$.
\end{lemma}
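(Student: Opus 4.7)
The plan is to introduce the exact backward solution as an intermediate object and then exploit reversibility of the true ODE flow. Let $w := x + \Psi_{-h}(x)$ denote the point produced by a single backward step of $\Psi$ from $x$, and let $\tilde{w} := x(-h)$ denote the exact ODE solution at time $|h|$ starting from $x$ and flowing under $-\mathrm{sgn}(h)\cdot f$. Observe that $\Psi_h(w) + \Psi_{-h}(x) = \Psi_h(w) - (x - w)$, so the quantity to bound measures how far a forward step from $w$ fails to return to $x$.

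The first step is to apply Definition \ref{def:order_solver} to the backward step, giving $\|\tilde{w} - w\| \leq C_1 |h|^{\alpha+1}$. The second step invokes the reversibility of the exact flow: starting from $\tilde{w}$ and flowing forward under $f$ for time $|h|$ lands back at $x$. Applying Definition \ref{def:order_solver} at the initial condition $\tilde{w}$ to the forward step $\Psi_h$ then yields
\begin{equation*}
\bigl\|\Psi_h(\tilde{w}) - (x - \tilde{w})\bigr\| \leq C_1 |h|^{\alpha+1}.
\end{equation*}
At this stage we have the desired identity at the exact backward point $\tilde{w}$; all that remains is to transfer it from $\tilde{w}$ to the numerical point $w$.

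The transfer uses the Lipschitz condition of Definition \ref{def:lip-solver} together with the triangle inequality. Writing
\begin{equation*}
\Psi_h(w) - (x - w) = \bigl[\Psi_h(w) - \Psi_h(\tilde{w})\bigr] + \bigl[\Psi_h(\tilde{w}) - (x - \tilde{w})\bigr] + (\tilde{w} - w),
\end{equation*}
the first bracket is bounded by $C_2 |h|\cdot \|w - \tilde{w}\| \leq C_1 C_2 |h|^{\alpha+2}$, the second is bounded by $C_1 |h|^{\alpha+1}$, and the third by $C_1 |h|^{\alpha+1}$. Summing and using $|h|\leq h_{\max}$ gives a bound of the form $C_3 |h|^{\alpha+1}$ with $C_3 = 2C_1 + C_1 C_2 h_{\max}$.

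Overall, the argument is short and the mechanics are routine once the intermediate point $\tilde{w}$ is introduced. The only conceptual point worth highlighting, which I do not expect to be a real obstacle, is the appeal to the reversibility of the \emph{exact} flow (not of $\Psi$) to pivot from a backward order bound into a forward order bound; this is what allows both uses of Definition \ref{def:order_solver} to be applied to genuine ODE solutions rather than to $\Psi_h \circ \Psi_{-h}$ directly.
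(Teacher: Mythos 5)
Your proof is correct and follows essentially the same route as the paper: both introduce the exact backward point $\tilde{w} = x(-h)$, apply the order-$\alpha$ bound once to the backward step and once (via reversibility of the exact flow) to the forward step from $\tilde{w}$, and transfer to the numerical point $w$ with the Lipschitz condition, arriving at the same constant $C_3 = C_1(2 + C_2 h_{\max})$. The only blemish is a sign slip in your displayed decomposition --- the last term should be $w - \tilde{w}$ rather than $\tilde{w} - w$ --- which is harmless since you only use its norm.
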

\begin{proof} By definition \ref{def:order_solver}, \ref{def:lip-solver} and the triangle inequality, we have
\begin{align*}
\big\|\Psi_h(x +  \Psi_{-h}(x)) +  \Psi_{-h}(x)\big\| & = \big\|\big(x(-h) + \Psi_h(x(-h)) - x\big) + \big(x +  \Psi_{-h}(x) - x(-h)\big) \\
&\mmm + \Psi_h(x +  \Psi_{-h}(x)) - \Psi_h(x(-h))\big\|\\[3pt]
&\leq \big\|x(-h) + \Psi_h(x(-h))  - x\big\| + \big\|x +  \Psi_{-h}(x) - x(-h)\big\|\\
&\mmm + \big\|\Psi_h(x +  \Psi_{-h}(x)) - \Psi_h(x(-h))\big\|\\[3pt]
&\leq 2C_1|h|^{\alpha+1} + C_2 |h| \big\|x +  \Psi_{-h}(x) - x(-h)\big\|\\[3pt]
&\leq \underbrace{C_1\big(2+ C_2\, h_{\max}\big)}_{=:\,C_3} |h|^{\alpha+1}.
\end{align*}
\end{proof}

Next, we bound the error between the reversible solver states in lemma \ref{lemma:yz_close}. \\

\begin{lemma}
For a fixed time horizon $T > 0$, we consider the ODE (\ref{eq:ode}) over the interval $[0,T]$. Suppose that $T = Nh$ were $N > 0$ denote the number of steps and $h > 0$ denotes the step size.
Consider the reversible numerical solution $\{(y_k, z_k)\}_{0\leq k\leq N}$ with $y_0 = z_0 = y(0)$ and, for $k\geq 0$,\label{lemma:yz_close}
\begin{align*}
y_{k+1} & := \lambda y_k + (1-\lambda)z_k + \Psi_{h}(z_k),\\[3pt]
z_{k+1} & := z_k - \Psi_{-h}(y_{k+1}),
\end{align*}
where $\lambda\in (0,1]$ and $\Psi$ denotes an order $\alpha$ ODE solver satisfying the Lipschitz condition. Then there exists $h_{\max}, C_4 > 0$ such that
\begin{align}\label{eq:yz_close}
\|y_k - z_k\| \leq C_4 h^{\alpha}\,,
\end{align}
for $h\in (0, h_{\max}]$ and $0\leq k\leq N$.
\end{lemma}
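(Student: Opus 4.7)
My plan is to derive a linear recurrence for $e_k := \|y_k - z_k\|$ with $e_0 = 0$, and then use a discrete Gr\"onwall-type argument to conclude $e_N = O(h^\alpha)$. The key observation is that the forward step can be rewritten as
\begin{equation*}
y_{k+1} = z_k + \Psi_h(z_k) + \lambda(y_k - z_k),
\end{equation*}
so $y_{k+1}$ is a perturbation of the ``one-step forward from $z_k$'' state $z_k + \Psi_h(z_k)$, with perturbation of size at most $\lambda e_k$.

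Subtracting the two update rules gives
\begin{equation*}
y_{k+1} - z_{k+1} = \lambda(y_k - z_k) + \Psi_h(z_k) + \Psi_{-h}(y_{k+1}),
\end{equation*}
so the work reduces to bounding $\|\Psi_h(z_k) + \Psi_{-h}(y_{k+1})\|$. I would handle this in two steps. First, I would compare $\Psi_{-h}(y_{k+1})$ to $\Psi_{-h}(z_k + \Psi_h(z_k))$ using the Lipschitz-in-space condition from Definition \ref{def:lip-solver}; since the two arguments differ by $\lambda(y_k - z_k)$, this costs at most $C_2 h \lambda \, e_k$. Second, I would apply Lemma \ref{lemma:forward_backward} with the roles of $h$ and $-h$ swapped (the lemma is symmetric in the sign of $h$) at the point $x = z_k$ to get
\begin{equation*}
\|\Psi_h(z_k) + \Psi_{-h}(z_k + \Psi_h(z_k))\| \leq C_3 h^{\alpha+1}.
\end{equation*}
Combining these by the triangle inequality and substituting back yields
\begin{equation*}
e_{k+1} \leq \lambda(1 + C_2 h)\, e_k + C_3 h^{\alpha+1}.
\end{equation*}

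With $e_0 = 0$, iterating this recurrence gives $e_k \leq C_3 h^{\alpha+1} \sum_{j=0}^{k-1} a^j$ where $a = \lambda(1 + C_2 h) \leq 1 + C_2 h$. Using $a^j \leq (1 + C_2 h)^{N} \leq e^{C_2 T}$ for all $j \leq N$, the sum is bounded by $N e^{C_2 T} = (T/h)\, e^{C_2 T}$, so
\begin{equation*}
e_k \leq C_3 T\, e^{C_2 T}\, h^{\alpha} =: C_4 h^\alpha,
\end{equation*}
uniformly in $0 \leq k \leq N$, which is the desired bound.

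The main obstacle is finding the right ``anchor point'' to invoke Lemma \ref{lemma:forward_backward}: naively one might try $x = y_{k+1}$ (since $\Psi_{-h}(y_{k+1})$ appears explicitly), but then $y_{k+1} + \Psi_{-h}(y_{k+1}) = z_k + (y_{k+1} - z_{k+1})$ is not a state whose forward image is $\Psi_h(z_k)$. Using $x = z_k$ instead, together with the algebraic rearrangement above, is what makes the telescoping work and isolates a clean $\lambda e_k$ perturbation that the solver's Lipschitz property can absorb. Note also that the argument is uniform in $\lambda \in (0,1]$ because the discrete Gr\"onwall bound never requires dividing by $\lambda$ or by $\lambda(1+C_2 h) - 1$; only the upper bound $a \leq 1 + C_2 h$ is used.
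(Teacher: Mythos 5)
Your proof is correct and follows the same overall strategy as the paper's: subtract the two update rules to isolate $\Psi_h(z_k) + \Psi_{-h}(y_{k+1})$, then control this term with Lemma \ref{lemma:forward_backward} plus the Lipschitz condition. The one genuine difference is the choice of anchor point for the forward--backward cancellation. The paper substitutes $z_k = z_{k+1} + \Psi_{-h}(y_{k+1})$ and anchors the lemma at $x = z_{k+1}$, which places the Lipschitz correction on $\|y_{k+1} - z_{k+1}\|$ and yields an \emph{implicit} recurrence $E_{k+1}^{yz} \leq \lambda E_k^{yz} + C_3 h^{\alpha+1} + (C_2 h + C_2^2 h^2)E_{k+1}^{yz}$, which then has to be resolved by dividing through and invoking the ad hoc bound $(1-x-x^2)^{-1} \leq 16^x$. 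You instead use $y_{k+1} = z_k + \Psi_h(z_k) + \lambda(y_k - z_k)$ and anchor at $x = z_k$ (with $h \mapsto -h$, which the lemma permits since it holds for all $h \in [-h_{\max}, h_{\max}]$), so the Lipschitz correction lands on $\lambda\|y_k - z_k\|$ and the recurrence $e_{k+1} \leq \lambda(1+C_2h)e_k + C_3 h^{\alpha+1}$ is explicit. This is a mild but real simplification: it removes the smallness restriction needed to invert $1 - C_2h - C_2^2h^2$ and makes the discrete Gr\"onwall step routine, at the cost of nothing. Your closing bound $e_k \leq C_3 T e^{C_2 T} h^\alpha$ is valid and uniform in $k$ and $\lambda$ as you claim.
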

\begin{proof}
Letting $E_k^{yz} := \|y_k - z_k\|$, it follows from definition \ref{def:lip-solver} and lemma \ref{lemma:forward_backward} that
\begin{align*}
E_{k+1}^{yz}  & = \| y_{k+1} - z_{k+1}\| \\[3pt]
& = \big\|\lambda \big(y_k - z_k\big) + \Psi_h(z_k) + \Psi_{-h}(y_{k+1})\big\|\\[3pt]
& = \big\| \lambda \big(y_k - z_k\big) + \Psi_h(z_k) +  \Psi_{-h}(z_{k+1}) + \Psi_{-h}(y_{k+1})-  \Psi_{-h}(z_{k+1})\big\|\\[3pt]
& \leq \lambda \|y_k - z_k\| + \big\|\Psi_h(z_k) +  \Psi_{-h}(z_{k+1})\big\| + \big\|\Psi_{-h}(y_{k+1})-  \Psi_{-h}(z_{k+1})\big\|\\[3pt]
& \leq \lambda \|y_k - z_k\| + \big\|\Psi_h(z_{k+1} +  \Psi_{-h}(y_{k+1})) +  \Psi_{-h}(z_{k+1})\big\| + \|\Psi_{-h}\|_{\text{Lip-}1}\big\|y_{k+1}- z_{k+1}\big\|\\[3pt]
& \leq \lambda \|y_k - z_k\| +  \big\|\Psi_h(z_{k+1} +  \Psi_{-h}(z_{k+1})) +  \Psi_{-h}(z_{k+1})\big\| + C_2\, h\big\|y_{k+1}- z_{k+1}\big\| \\[1pt]
&\hspace{22.5mm} +  \big\|\Psi_h(z_{k+1} +  \Psi_{-h}(y_{k+1})) - \Psi_h(z_{k+1} +  \Psi_{-h}(z_{k+1}))\big\| \\[3pt]
& \leq \lambda E_k^{yz} + C_3 h^{\alpha + 1} + C_2 h E_{k+1}^{yz} +  C_2^2 h^2 E_{k+1}^{yz}\,.
\end{align*}
Therefore, we have the following inequality,
\begin{align*}
E_{k+1}^{yz}  \leq \frac{1}{1 - C_2 h - C_2^2 h^2}\big(\lambda E_k^{yz} + C_3 h^{\alpha + 1}\big),
\end{align*}
provided $C_2 h < \frac{1 +\sqrt{5}}{2}$. Note that $(1 - x - x^2)^{-1} \leq 16^x$ for $x\in[0, \frac{1}{2}]$. Hence, for $h \leq \frac{1}{2 C_2}$,
\begin{align*}
E_{k+1}^{yz}  \leq 16^{C_2 h}\big(\lambda E_k^{yz} + C_3 h^{\alpha + 1}\big).
\end{align*}
Applying the above inequality $k$ times and using $E_0^{yz} = 0$, we have
\begin{align*}
E_{k}^{yz} & \leq  16^{C_2 h} C_3\, h^{\alpha + 1}\frac{16^{C_2 k h} - 1}{16^{C_2 h} - 1}\\
& = C_3 \frac{16^{C_2 h}}{\log(16)C_2}\big(16^{C_2 kh} - 1\big)\frac{\log(16)C_2h}{e^{\log(16)C_2 h} - 1} h^{\alpha}\\
& \leq \underbrace{C_3 \frac{16^{C_2 h}}{\log(16)C_2}\big(16^{C_2 T} - 1\big)}_{=:\,C_4} h^\alpha,
\end{align*}
where we used $kh \leq T$ and the inequality $1 + x \leq e^x$.
\end{proof}

Bringing the above analysis together, we now prove Theorem \ref{thm:convergence}.

\begin{theorem*}
For a fixed time horizon $T > 0$, we consider the ODE (\ref{eq:ode}) over the interval $[0,T]$. Let $T = Nh$ where $N > 0$ denotes the number of steps and $h > 0$ is the step size.
Consider the reversible numerical solution $\{(y_k, z_k)\}_{0\leq k\leq N}$ with $y_0 = z_0 = y(0)$ and, for $k\geq 0$,
\begin{align}
y_{k+1} & := \lambda y_k + (1-\lambda)z_k + \Psi_{h}(z_k),\label{eq:reversible_ode3}\\[3pt]
z_{k+1} & := z_k - \Psi_{-h}(y_{k+1}),\label{eq:reversible_ode4}
\end{align}
where $\lambda\in (0,1]$ and $\Psi$ denotes an order $\alpha$ ODE solver satisfying the Lipschitz condition. Then there exist constants $h_{\max}, C_5 > 0$ such that, for $h\in (0, h_{\max}]$,
\begin{align}\label{eq:yy_close}
\|y_k - y(t_k)\| \leq C_5 h^{\alpha}\,,
\end{align}
for $0\leq k\leq N$ where $t_k := kh$.
\end{theorem*}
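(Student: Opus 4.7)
The plan is to introduce the error variables $e_k^y := y_k - y(t_k)$ and $e_k^z := z_k - y(t_k)$, derive one-step bounds for each from the reversible update~\eqref{eq:reversible_ode3}--\eqref{eq:reversible_ode4}, and reduce the resulting coupled system to a single discrete Gr\"onwall recursion. The per-step estimates use two ingredients: Definition~\ref{def:order_solver}, which gives local truncation errors of size $C_1 h^{\alpha+1}$ for $\Psi_h$ at $y(t_k)$ and for $\Psi_{-h}$ at $y(t_{k+1})$; and Definition~\ref{def:lip-solver}, used to compare $\Psi_{\pm h}$ evaluated at the numerical states versus the exact ones.

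Inserting $\pm\lambda y(t_k) \pm (1-\lambda) y(t_k) \pm \Psi_h(y(t_k))$ into the $y$-update and $\pm y(t_k) \pm \Psi_{-h}(y(t_{k+1}))$ into the $z$-update, then applying the triangle inequality, I expect to obtain
\begin{align*}
\|e_{k+1}^y\| &\leq \lambda\|e_k^y\| + (1-\lambda + C_2 h)\|e_k^z\| + C_1 h^{\alpha+1},\\
\|e_{k+1}^z\| &\leq \|e_k^z\| + C_2 h\,\|e_{k+1}^y\| + C_1 h^{\alpha+1}.
\end{align*}
Note that the $z$-recursion is implicit in $e_{k+1}^y$, because $y_{k+1}$ appears inside $\Psi_{-h}$ in the $z$-update; this is an essential structural feature of the reverse step.

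The main obstacle is that a direct Gr\"onwall on the first inequality picks up the term $(1-\lambda)\|e_k^z\|$ with no factor of $h$: even after substituting $\|e_k^z\|\leq \|e_k^y\| + C_4 h^\alpha$ via Lemma~\ref{lemma:yz_close}, the additive $(1-\lambda)C_4 h^\alpha$ per step accumulates over $N = T/h$ steps to $O(h^{\alpha-1})$ and loses one full order of convergence. The remedy is to pivot to the $z$-recursion, whose coefficient on $\|e_k^z\|$ is exactly $1$ and whose coupling to $\|e_{k+1}^y\|$ already carries the needed factor of $h$. Applying Lemma~\ref{lemma:yz_close} in the form $\|e_{k+1}^y\| \leq \|e_{k+1}^z\| + C_4 h^\alpha$ and rearranging,
\begin{align*}
(1 - C_2 h)\,\|e_{k+1}^z\| \leq \|e_k^z\| + (C_1 + C_2 C_4)\,h^{\alpha+1}.
\end{align*}
For $h$ small enough that $1 - C_2 h \geq \tfrac{1}{2}$, this is a clean discrete Gr\"onwall inequality with multiplier $1 + O(h)$ and additive term $O(h^{\alpha+1})$; combined with $e_0^z = 0$ and the standard bound $(1 + 2 C_2 h)^N \leq e^{2 C_2 T}$, it yields $\|e_k^z\| \leq C' h^\alpha$ uniformly in $0 \leq k \leq N$. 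A final use of Lemma~\ref{lemma:yz_close} then gives $\|e_k^y\| \leq \|e_k^z\| + \|y_k - z_k\| \leq (C' + C_4) h^\alpha$, completing the proof with $C_5 := C' + C_4$.
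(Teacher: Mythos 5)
Your proof is correct, and it takes a genuinely different route from the paper's. The paper handles the problematic $(1-\lambda)\|z_k - y(t_k)\|$ coupling by introducing the interpolated state $x_k := \lambda^{N-k}y_k + (1-\lambda^{N-k})z_k$, chosen so that the correspondingly weighted combination of the two updates telescopes: the recursion for $E_k = \|x_k - y(t_k)\|$ then has unit coefficient on $E_k$ and only $O(h^{\alpha+1})$ forcing, after invoking both Lemma~\ref{lemma:forward_backward} (to control $\|\Psi_h(z_k)+\Psi_{-h}(y_{k+1})\|$) and Lemma~\ref{lemma:yz_close}. You instead observe that the $z$-recursion alone already has this favourable structure: $\|e_{k+1}^z\| \leq \|e_k^z\| + C_1 h^{\alpha+1} + C_2 h\,\|e_{k+1}^y\|$, where the dangerous $\|e_{k+1}^y\|$ term enters only with a factor $C_2 h$, so converting it to $\|e_{k+1}^z\| + C_4 h^{\alpha}$ via Lemma~\ref{lemma:yz_close} costs only $O(h^{\alpha+1})$ per step, and the resulting implicit term $C_2 h\,\|e_{k+1}^z\|$ is absorbed for $h \leq 1/(2C_2)$. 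Your route is shorter: it needs only one application of Definition~\ref{def:order_solver} (local accuracy of the backward step $\Psi_{-h}$ at $y(t_{k+1})$), the Lipschitz condition, and Lemma~\ref{lemma:yz_close}; Lemma~\ref{lemma:forward_backward} enters only indirectly through the proof of Lemma~\ref{lemma:yz_close}. There is no circularity, since that lemma bounds $\|y_k - z_k\|$ without reference to the exact solution. Your diagnosis of why the naive $y$-recursion fails --- the $(1-\lambda)\|e_k^z\|$ term carries no factor of $h$ and would accumulate to $O(h^{\alpha-1})$ over $N = T/h$ steps --- is exactly the difficulty the paper's $x_k$ construction is designed to circumvent; both remedies are valid, and yours is arguably the more economical.
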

\begin{proof}

Let $x_k := \lambda^{N-k} y_k + (1-\lambda^{N-k})z_k$ and consider the error $E_k := \| x_k - y(t_k)\|$. Then, by the triangle inequality and previous lemmas, we have
\begin{align*}
E_{k+1} & = \big\| \lambda^{N-(k+1)} y_{k+1} + (1-\lambda^{N-(k+1)})z_{n+1} - y(t_{k+1})\big\|\\[3pt]
& = \big\| \lambda^{N-(k+1)} \big(\lambda y_k + (1-\lambda)z_k + \Psi_{h}(z_k)\big) + (1-\lambda^{N-(k+1)})\big(z_k - \Psi_{-h}(y_{k+1})\big) - y(t_{k+1})\big\|\\[3pt]
& = \big\| \lambda^{N-k} y_k + (1-\lambda^{N-k})z_k + \lambda^{N-(k+1)}\Psi_{h}(z_k) - (1-\lambda^{N-(k+1)})\big(\Psi_{-h}(y_{k+1})\big) - y(t_{k+1})\big\|\\[3pt]
& \leq E_k + \big\| \,y(t_k) + \lambda^{N-(k+1)}\Psi_{h}(z_k) - (1-\lambda^{N-(k+1)})\big(\Psi_{-h}(y_{k+1})\big) - y(t_{k+1})\big\|\\[3pt]
& \leq E_k + \lambda^{N-(k+1)}\big\|\Psi_{h}(z_k) + \Psi_{-h}(y_{k+1})\big\| + \big\|y(t_k) - \big(y(t_{k+1}) + \Psi_{-h}(y_{k+1})\big)\big\|\\[3pt]
&  \leq E_k + \lambda^{N-(k+1)}\big\|\Psi_{h}(z_{k+1} + \Psi_{-h}(y_{k+1})) + \Psi_{-h}(y_{k+1})\big\|\\[2pt]
&\hspace{9mm} + \big\|y(t_k) - \big(y(t_{k+1}) + \Psi_{-h}(y(t_{k+1})\big)\big\| + \big\|\Psi_{-h}(y(t_{k+1})) - \Psi_{-h}(y_{k+1})\big\|\\
& \leq E_k + \lambda^{N-(k+1)}\big\|\Psi_{h}(y_{k+1} + \Psi_{-h}(y_{k+1})) + \Psi_{-h}(y_{k+1})\big\|\\[2pt]
&\hspace{9mm} + \lambda^{N-(k+1)}\big\|\Psi_{h}(z_{k+1} + \Psi_{-h}(y_{k+1})) - \Psi_{h}(y_{k+1} + \Psi_{-h}(y_{k+1}))\big\|\\[2pt]
&\hspace{9mm} + C_1 h^{\alpha + 1} + \big\|\Psi_{-h}(y(t_{k+1})) - \Psi_{-h}(x_{k+1})\big\| + \big\|\Psi_{-h}(x_{k+1}) - \Psi_{-h}(y_{k+1})\big\|\\[3pt]
& \leq E_k + \lambda^{N-(k+1)} C_3 h^{\alpha+1} + \lambda^{N-(k+1)} C_2 h\|z_{k+1} - y_{k+1}\|\\[2pt]
& \hspace{9mm} + C_1 h^{\alpha + 1} + C_2 h\, E_{k+1} + C_2 h\|x_{k+1} - y_{k+1}\|\\[3pt]
& \leq E_k + \lambda^{N-(k+1)} C_3 h^{\alpha+1} + \lambda^{N-(k+1)} C_2 h\|y_{k+1} - z_{n+1}\|\\[2pt]
& \hspace{9mm} + C_1 h^{\alpha + 1} + C_2 h\, E_{k+1} + (1 - \lambda^{N-(k+1)})C_2 h\|y_{k+1} - z_{n+1}\|\\[3pt]
& \leq E_k +  C_2 h\, E_{k+1} + (C_1 +  C_2 C_4 + C_3)h^{\alpha+1}.
\end{align*}
Therefore, provided $C_2 h < 1$,
\begin{align*}
E_{k+1} \leq \frac{1}{1-C_2 h} E_k + \frac{C_1 +  C_2 C_4 + C_3 }{1-C_2 h}h^{\alpha+1}.
\end{align*}
Since $(1-x)^{-1} \leq 1 + 2x \leq e^{2x}$ for $x\in[0,\frac{1}{2}]$, we have for $h \leq \frac{1}{2C_2}$,
\begin{align*}
E_{k+1} \leq e^{2C_2 h} E_k + (C_1 +  C_2 C_4 + C_3 )(1 + 2C_2 h)h^{\alpha+1}.
\end{align*}
Applying the above inequality $k$ times and using $E_0 = 0$, we have
\begin{align*}
E_k & \leq (C_1 +  C_2 C_4 + C_3 )(1 + 2C_2 h)h^{\alpha+1}\frac{e^{2C_2 kh} - 1}{e^{2C_2 h} - 1}\\[3pt]
& = (C_1 +  C_2 C_4 + C_3 )(1 + 2C_2 h)\frac{e^{2C_2 kh} - 1}{2C_2}\frac{2C_2 h}{e^{2C_2 h} - 1} h^{\alpha}\\[3pt]
& \leq (C_1 +  C_2 C_4 + C_3 )(1 + 2C_2 h_{\max})\frac{e^{2C_2 T} - 1}{2C_2} h^{\alpha},
\end{align*}
where we used $kh \leq T$ and the inequality $1 + x \leq e^x$. Therefore
\begin{align*}
\|y_k - y(t_k)\| & \leq \|y_k - x_k\| + \|x_k - y(t_k)\|\\[3pt]
& = E_k + (1 - \lambda^{N-k})\| y_k - z_k\|\\[3pt]
& \leq \underbrace{\left((C_1 +  C_2 C_4 + C_3 )(1 + 2C_2 h_{\max})\frac{e^{2C_2 T} - 1}{2C_2} + C_4\right)}_{=:\,C_5} h^\alpha.
\end{align*}
\end{proof}

\section{Proof of stability (Theorem \ref{thm:stability})}
\label{app:stability}
For convenience we provide the description of Theorem \ref{thm:stability} here.
\begin{theorem*}
    Let $\Psi$ be given by an explicit Runge-Kutta solver. Then the reversible numerical solution $\{y_n, z_n\}_{n\geq 0}$ given by
    \begin{equation*}
        \begin{aligned}
            y_{n+1} &= \lambda y_n+(1-\lambda)z_n + \Psi_h(t_n, z_n), \\
            z_{n+1} &= z_n - \Psi_{-h}(t_{n+1}, y_{n+1}),
        \end{aligned}
    \end{equation*}
    is linearly stable iff
    \begin{equation*}
        |\Gamma| < 1+\lambda,
    \end{equation*}
    where
    \begin{equation*}
        \Gamma = 1+\lambda - (1-\lambda)R(-h\alpha)-R(-h\alpha)R(h\alpha).
    \end{equation*}
\end{theorem*}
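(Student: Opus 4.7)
The plan is to reduce the stability analysis to a two-dimensional linear recursion and then apply the Schur--Cohn criterion (or equivalently a direct eigenvalue computation) to the resulting $2{\times}2$ transfer matrix.

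First I would apply the reversible scheme \eqref{eq:reversible-solver-forward} to the linear test problem \eqref{eq:linear-stability}. Using the transfer function identity $\Psi_h(t_n, y)=R(h\alpha)\,y$ for an explicit Runge--Kutta method (and similarly $\Psi_{-h}(t_{n+1}, y)=R(-h\alpha)\,y$), the update for the pair $(y_n, z_n)$ becomes linear:
\begin{align*}
y_{n+1} &= \lambda y_n + \bigl((1-\lambda)+R(h\alpha)\bigr)z_n,\\
z_{n+1} &= -\lambda R(-h\alpha)\, y_n + \bigl(1 - (1-\lambda)R(-h\alpha) - R(-h\alpha)R(h\alpha)\bigr)z_n,
\end{align*}
obtained by substituting the $y_{n+1}$ update directly into the $z_{n+1}$ update. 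This identifies the matrix $T\in\mathbb{R}^{2\times 2}$ from Definition~\ref{def:linear-stability}.

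Next I would compute the two invariants of $T$. A short calculation cancels all mixed terms and yields the clean identities $\operatorname{tr}(T)=\Gamma$ and $\det(T)=\lambda$, so the characteristic polynomial is
\begin{equation*}
p(\mu) = \mu^2 - \Gamma\mu + \lambda.
\end{equation*}
The spectral radius condition $\rho(T)<1$ is then equivalent to both roots of $p$ lying strictly inside the unit disk. Applying the Schur--Cohn criterion for a monic quadratic (i.e.\ $p(1)>0$, $p(-1)>0$, and $|\det|<1$) reduces to the two conditions $|\Gamma|<1+\lambda$ and $\lambda<1$. Since the coupling parameter is restricted to $\lambda\in(0,1]$, the second condition is automatic in the interior $\lambda\in(0,1)$ (and the boundary $\lambda=1$ corresponds to a marginal case where $|\det T|=1$ and strict stability fails), leaving $|\Gamma|<1+\lambda$ as the operative stability condition.

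The main obstacle is simply the bookkeeping in the determinant calculation: verifying that the cross-terms involving $(1-\lambda)R(-h\alpha)$ and $R(-h\alpha)R(h\alpha)$ cancel against the corresponding off-diagonal contributions to produce the remarkably simple expression $\det(T)=\lambda$. Once this is in hand, the Schur--Cohn step is mechanical, and I would finish by noting the equivalence of the Schur--Cohn inequalities $p(\pm 1)>0$ with the single condition $|\Gamma|<1+\lambda$ under $\lambda\in(0,1)$.
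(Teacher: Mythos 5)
Your proposal is correct and follows essentially the same route as the paper: reduce to the $2\times 2$ transfer matrix, obtain the characteristic polynomial $\mu^2-\Gamma\mu+\lambda$ (via $\operatorname{tr}(T)=\Gamma$, $\det(T)=\lambda$), and test for roots in the open unit disk --- the paper does this last step via the M\"obius transformation $(e_k+1)/(e_k-1)$ followed by the Routh--Hurwitz test, which is precisely the Schur--Cohn criterion you invoke, with the paper's condition $2(1-\lambda)>0$ corresponding to your $|\det(T)|=\lambda<1$. Your explicit handling of the marginal case $\lambda=1$, where $\det(T)=1$ forces $\rho(T)\geq 1$, is in fact slightly more careful than the paper, which simply restricts to $\lambda\in(0,1)$ at that point.
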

\begin{proof}
    Firstly, we re-write the reversible method applied to the linear test problem (definition \ref{def:linear-stability}) using the Runge-Kutta transfer function. We get
    \begin{equation*}
        \begin{aligned}
            y_{n+1}&=\lambda y_n+(1-\lambda)z_n+R(h\alpha)z_n, \\
            z_{n+1}&=z_n-R(-h\alpha)y_{n+1}.
        \end{aligned}
    \end{equation*}
    Writing the coupled system as a matrix-vector product results in
    \begin{equation*}
        \begin{pmatrix} y_{n+1} \\ z_{n+1} \end{pmatrix} = \begin{pmatrix} \lambda & 1-\lambda + R(h\alpha) \\ -\lambda R(-h \alpha) & 1 + (1-\lambda)R(h\alpha) - R(-h\alpha)R(h\alpha) \end{pmatrix}\begin{pmatrix} y_n \\ z_n \end{pmatrix}.
    \end{equation*}
    For stability we require that the eigenvalues of the coupling matrix are by magnitude less than one. The eigenvalues are given by the characteristic equation,
    \begin{equation*}
        P(e_k) = e_k^2-\left(1+\lambda-(1-\lambda)R(-h\alpha)-R(-h\alpha)R(h\alpha)\right)e_k+\lambda=0.
    \end{equation*}
    
    \textbf{Routh-Hurwitz test}
    
    The Routh-Hurwitz test is a condition on the coefficients of a polynomial such that the roots lie in the left half complex plane. Instead we require that the roots lie in the unit disc. Therefore, we apply the following map to the characteristic equation $P(e_k)$,
    \begin{equation*}
        Q(e_k)=(e_k-1)^2P\left(\frac{e_k+1}{e_k-1}\right),
    \end{equation*}
    such that if $Q$ is Routh-Hurwitz stable, then the roots of $P$ lie in the open unit disc (referred to as Schur stable) \cite{bhattacharyya1995robust}.
    
    The polynomial $Q$ obtained by the transformation is given by
    \begin{equation*}
        Q(e_k)=\left(1+\lambda-\Gamma\right)e_k^2+2(1-\lambda)e_k+(1+\lambda+\Gamma)=0,
    \end{equation*}
    where
    \begin{equation*}
        \Gamma = 1+\lambda-(1-\lambda)R(-h\alpha)-R(-h\alpha)R(h\alpha).
    \end{equation*}
    
    We proceed to check when $Q$ is Routh-Hurwitz stable. For quadratic polynomials, the Routh-Hurwitz test reduces to checking that the coefficients of $Q$ are all positive. Therefore, for stability, we require
    \begin{equation*}
        \begin{aligned}
            1+\lambda-\Gamma &> 0 \hspace{8mm} \text{(i)} \\
            2(1-\lambda) &> 0 \hspace{8mm} \text{(ii)} \\
            1+\lambda+\Gamma &> 0 \hspace{8mm} \text{(iii)}
        \end{aligned}
    \end{equation*}
    Condition (ii) is immediately satisfied by the definition of $\lambda \in (0, 1)$. Condition (i) and (iii) may be combined into the condition
    \begin{equation*}
        |\Gamma| < 1 + \lambda.
    \end{equation*}
\end{proof}

\section{Experimental details}
All experiments were run on one Dual NVIDIA RTX A6000, 48 GB.

For all experiments the Neural ODE model is defined with a small feed-forward neural network vector field with two layers and hidden size $10$. The activation function used was $\tanh$. Each training run was performed for multiple repeats with random network initializations.

The reversible solver was initialized with coupling parameter $\lambda=0.99$.

\subsection{Chandrasekhar’s White Dwarf Equation}
Training data was generated by simulating the white dwarf system over $r\in [0, 5]$ with $C=0.001$ for $1000$ time steps. The Neural ODE model was trained to minimise mean squared error over $1000$ time steps. The number of training steps was $1000$ and the optimizer was AdamW with learning rate $10^{-2}$ and weight decay $10^{-5}$ \cite{loshchilov2019decoupled}.

\subsection{Coupled oscillators}
The experimental coupled oscillator data from \cite{schmidt2009distilling} was linearly interpolated, sampled at $500$ time steps over $t\in[0, 3]$ and normalized. The Neural ODE model was trained to minimise mean squared error over $500$ time steps. The number of training steps was $10000$ and the optimizer was AdamW with learning rate $10^{-2}$ and weight decay $10^{-5}$.

\subsection{Chaotic double pendulum}
The experimental double pendulum data from \cite{schmidt2009distilling} was linearly interpolated, sampled at $500$ time steps over $t\in[0, 2]$ and normalized. The Neural ODE model was trained to minimise mean squared error over an adaptively selected number of time steps. The number of training steps was $10000$ and the optimizer was AdamW with learning rate $10^{-2}$ and weight decay $10^{-5}$.

The adaptive time stepping algorithm was a PID controller from \cite{soderlind2003digital} with error estimates provided by the embedded second order method from Bogacki-Shampine \cite{bogacki19893}. The PID controller absolute and relative error tolerances were $10^{-6}$.


\end{document}